\newtheorem{assumption}{Assumption}
\newtheorem{definition}{Definition}
\newtheorem{lemma}{Lemma}
\newtheorem{remark}{Remark}
\newtheorem{example1}{Example}
\newif\ifshowedits 
\newcommand{\blue}[1]{{\ifshowedits\color{blue}#1\else#1\fi}}
\newcommand{\red}[1]{{\ifshowedits\color{red}\st{#1}\fi}}
\newcommand{\remove}[1]{{\ifshowedits\color{red}#1\fi}}
\def\BibTeX{{\rm B\kern-.05em{\sc i\kern-.025em b}\kern-.08em
    T\kern-.1667em\lower.7ex\hbox{E}\kern-.125emX}}
\begin{document}

\title{Improving the performance of Learned Controllers \\ in Behavior Trees using Value Function Estimates \\  at Switching Boundaries 
}
\author{
    Mart Kartašev and  Petter \"Ogren 

    \thanks{The authors are with the Robotics, Perception and Learning Lab., School of Electrical Engineering and Computer Science, Royal Institute of Technology (KTH), SE-100 44 Stockholm, Sweden, {\tt\small kartasev@kth.se}}
    \thanks{Digital Object Identifier (DOI): see top of this page.}
}

\maketitle


\begin{abstract}
Behavior trees represent a modular way to create an overall controller from a set of sub-controllers solving different sub-problems. 
These sub-controllers can be created using various methods, such as classical model based control or reinforcement learning (RL). If each sub-controller satisfies the preconditions of the next sub-controller, the overall controller will achieve the overall goal. However, even if all sub-controllers are locally optimal in achieving the preconditions of the next, with respect to some performance metric such as completion time, the overall controller might still be far from optimal with respect to the same performance metric.
In this paper we show how the performance  of the overall controller can be improved if we use approximations of value functions to inform the design of a sub-controller of the needs of the next one. We also show how, under certain assumptions, this leads to a globally optimal controller when the process is executed on all sub-controllers.
Finally, this result also holds when some of the sub-controllers are already given, i.e., if we are constrained to use some existing sub-controllers the overall controller will be globally optimal given this constraint.
\end{abstract}

\begin{IEEEkeywords}
Behavior trees, Reinforcement learning, Autonomous systems, Artificial Intelligence
\end{IEEEkeywords}

\section{Introduction }
\label{sec:introduction}

\IEEEPARstart{B}{ehavior trees} (BTs) are receiving increasing attention in robotics \cite{iovino2022survey,colledanchise2018behavior} where they are used to create modular reactive controllers from a set of sub-controllers solving different sub-problems. \red{ Some of these sub-controllers might have been created using RL, [3]. In this paper, we show how to improve the performance of such modular designs by iteratively computing value functions of sub-controllers, and using those value functions in the design of the sub-controllers executing before them.}\blue{In this paper, we show how to improve the performance of such modular designs when incorporating RL \cite{pereira2015framework} by iteratively computing value functions of sub-controllers and using those value functions in the design of the sub-controllers executing before them}.

BTs were originally conceived in the game AI domain \cite{islaHandlingComplexityHalo2005}, in an effort to make the controllers of in-game characters more modular, and were later shown to be optimally modular, in the sense of having so-called essential complexity equal to one \cite{biggarModularityReactiveControl2022}.

Modularity is an important property in many engineering disciplines that enables designers to solve problems by dividing them into sub-problems, that are combined into a solution for the overall problem. In the context of robot control, this might correspond to creating sub-controllers such as \emph{Move to}, \emph{Grasp}, \emph{Push} etc.
\red{When combining the sub-controllers a natural concern is to make sure each controller satisfies the preconditions of the next one.}
These controllers will then be executed in a sequence until a primary goal is reached.

However, just reaching the goal is sometimes not enough. Instead we might want to reach it in a way that is near optimal with respect to some \red{respect to some performance metric such as completion time or energy.}\blue{metric such as completion time, energy or safety}. If the overall task is associated with such a performance metric it might be that all sub-controllers are locally optimal with respect to this metric, \red{and}\blue{but} the overall controller is still far from globally optimal.

\begin{figure}
    \centering
     \begin{subfigure}[b]{0.45\columnwidth}
         \centering
         \setlength{\abovecaptionskip}{0pt}
         \includegraphics[trim={2.5cm 3cm 2cm 2cm},clip, width=4.10cm]{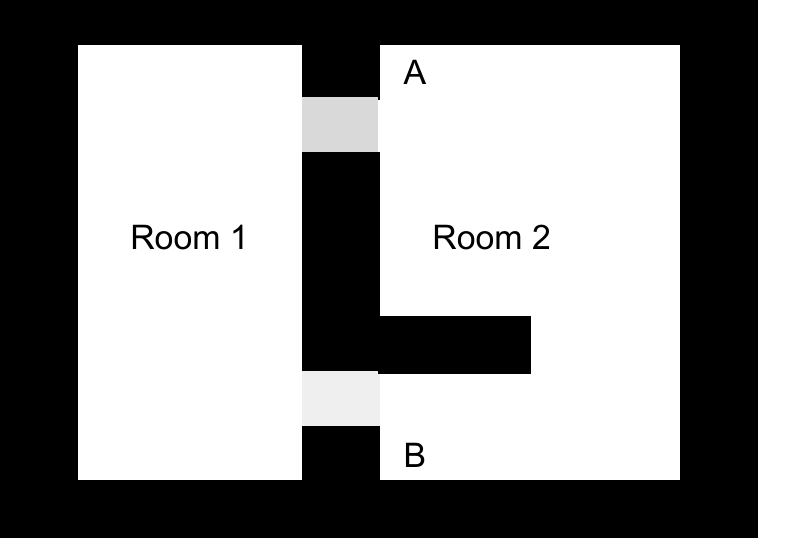}
         \caption{
         }
         \label{fig:1a}
     \end{subfigure}
    \hfill
     \begin{subfigure}[b]{0.45\columnwidth}
         \centering
          \setlength{\abovecaptionskip}{0pt}
         \includegraphics[trim={2.5cm 3cm 2cm 2cm},clip, width=4.10cm]{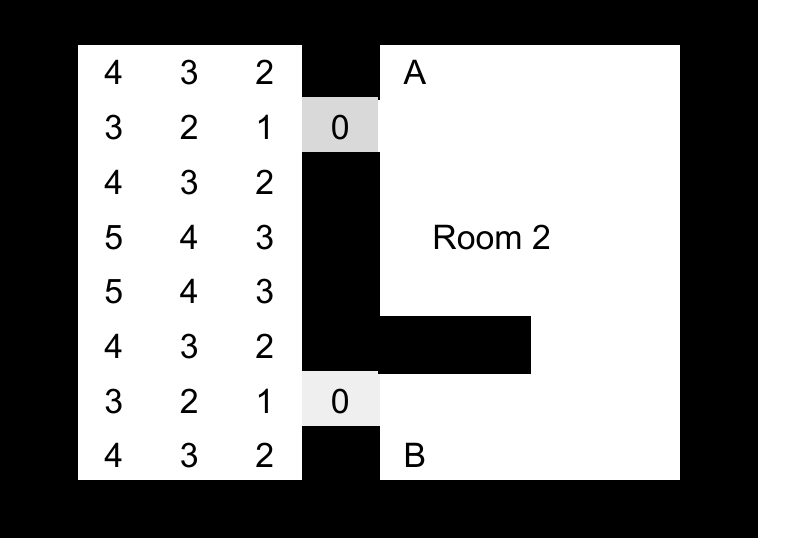}
         \caption{
         }
         \label{fig:1b}
     \end{subfigure}
     \hfill
     \begin{subfigure}[b]{0.45\columnwidth}
         \centering
          \setlength{\abovecaptionskip}{0pt}
         \includegraphics[trim={2.5cm 3cm 2cm 2cm},clip, width=4.10cm]{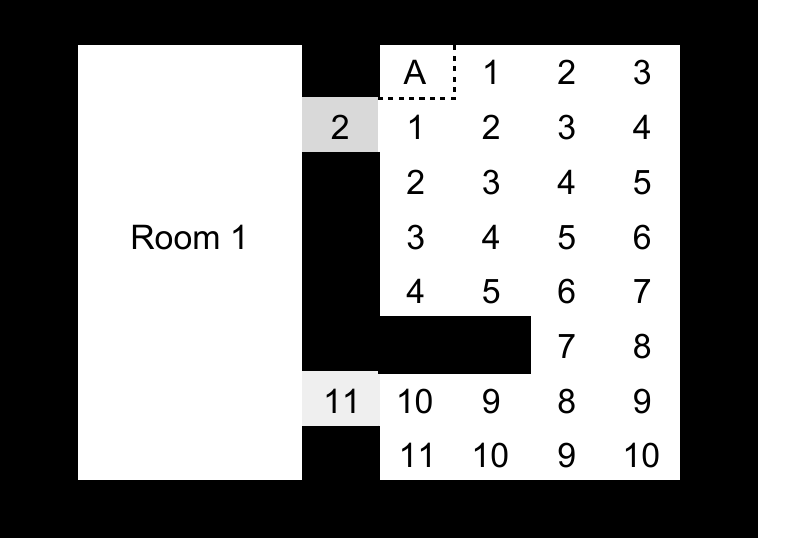}
         \caption{
         }
         \label{fig:1c}
     \end{subfigure}
 \hfill
      \begin{subfigure}[b]{0.45\columnwidth}
         \centering
          \setlength{\abovecaptionskip}{0pt}
         \includegraphics[trim={2.5cm 3cm 2cm 2cm},clip, width=4.10cm]{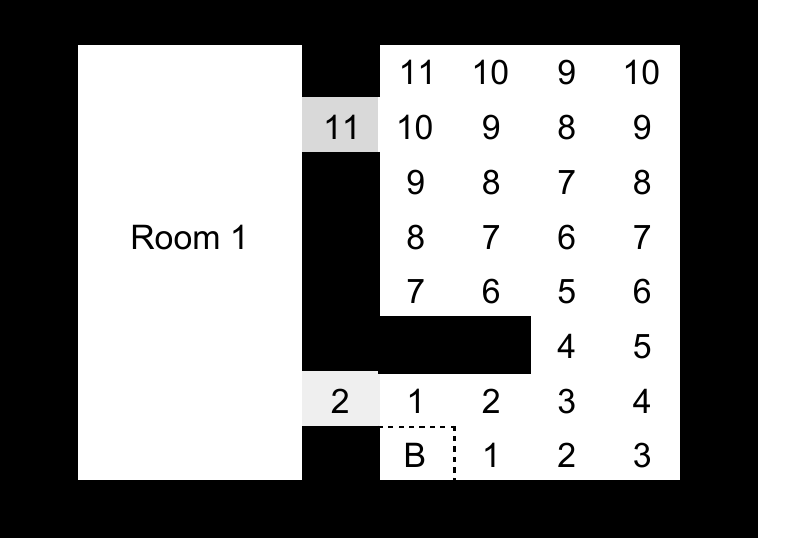}
         \caption{
         }
         \label{fig:1d}
      \end{subfigure}
      \caption{The agent first goes from room 1 to room 2, and then goes to either object A or object B, see the map in (a). The value functions for going from room 1 to room 2 can be seen in (b), the value function for going to A can be seen in (c) and the value function for going to B can be seen in (d). 
    }
 \label{fig:ex1_1}
\end{figure}

An example of this is shown in Figure~\ref{fig:ex1_1}.
Here the overall goal is to reach either position A or position B in room 2, starting from room 1. This problem is divided into three sub-controllers, \emph{Go to room 2} (\ref{fig:1b}), \emph{Go to position A} (\ref{fig:1c}) and \emph{Go to position B} (\ref{fig:1d}). The numbers in the figures show the value function in terms of the remaining time to completion (assuming one step takes one time unit). Let's assume that the controllers are to take the shortest path to their goal.
If we connect \emph{Go to room 2} with \emph{Go to position A}, the combined controller will first leave room 1 and then reach object A. However if we start in the lower half of room 1 we will exit through the lower door, from which there are 11 steps left to reach A, if we instead would have exited through the upper door, it would have taken longer time to exit room 1, but the path to A in room 2 would only be 2 steps long, leading to an overall shorter path. In this case the overall controller would not be globally optimal, even though the sub-controllers are locally optimal. 

As seen in the example above, to achieve global optimality each controller needs to be aware of how good different states are from the perspective of the next controller\red{, and}\blue{. This} information can be found using the value function (estimating the expected accumulated future reward) of the next controller \red{,evaluated} at the switching boundary. The results of doing this for \emph{Go to room 2} is shown in Figure~\ref{fig:VF_using_next}.

\blue{As an example where such issues arise in practice, consider a hypothetical agent navigating indoors with a "Move To" action. 
If completion time is penalized, a locally optimal solution will induce high speeds and accelerations.
However, this might cause problems after the switching boundary, where the robot might collide with an object, or frighten a human collaborator. With our method, the training of Move to can also take the considerations of the subsequent actions into account.
}

\red{
Looking back at the simple example, and using the value function of \emph{Go to position A} in Figure}
\remove{~\ref{fig:1c}}
\red{in the two doors between rooms 1 and 2 as reward/cost in the final step, we get a new value function of \emph{Goto room 2}, as shown in Figure}
\remove{~\ref{fig:R1gotoA}.}
\red{Doing this, the combination of the two value functions in Figures }
\remove{\ref{fig:1c}}
\red{and}
\remove{\ref{fig:R1gotoA}} 
\red{is identical to the value function one would get from treating Rooms 1 and 2 as a single room.}

\red{Similarly, if the overall task is to reach B, we can use the value function in Figure}
\remove{~\ref{fig:1d}}
\red{in the two doors between rooms 1 and 2 as reward in the final step, to get a new value function of \emph{Goto room 2}, as shown in Figure}
\remove{~\ref{fig:R1gotoB}}
\red{The core idea of this paper is to generalize this simple example to cases with several sub-controllers and higher dimensional state spaces. The sub-controllers can be created  using either reinforcement learning (RL) or some other design method. Most RL algorithms include a value function estimate and for methods that do not, we can use algorithms from the RL domain to estimate the value functions
}\remove{\cite{suttonReinforcementLearningIntroduction2018}.}

\begin{figure}
    \centering  
     \begin{subfigure}[b]{0.45\columnwidth}
         \centering
          \setlength{\abovecaptionskip}{0pt}
         \includegraphics[trim={2cm 3cm 2cm 2cm},clip, width=4.10cm]{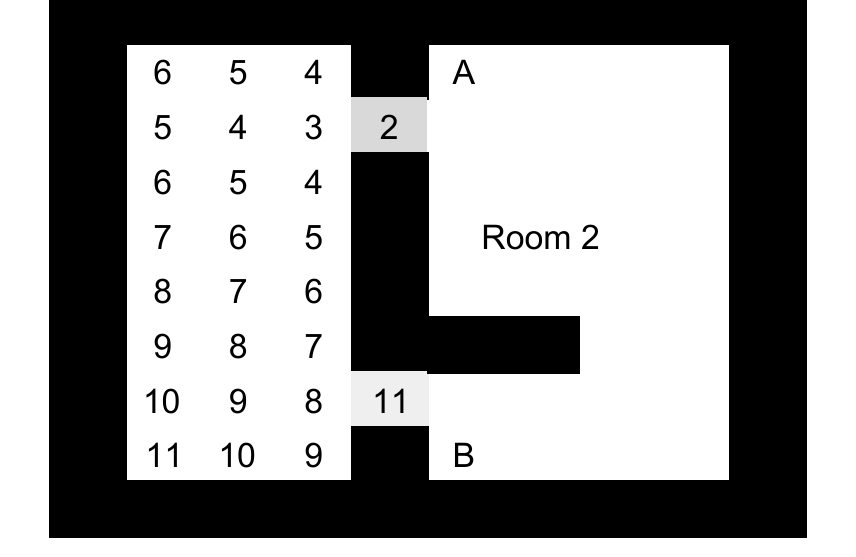}
         \caption{
         }
         \label{fig:R1gotoA}
      \end{subfigure}
\hfill
     \begin{subfigure}[b]{0.45\columnwidth}
         \centering
          \setlength{\abovecaptionskip}{0pt}
         \includegraphics[trim={2cm 3cm 2cm 2cm},clip, width=4.10cm]{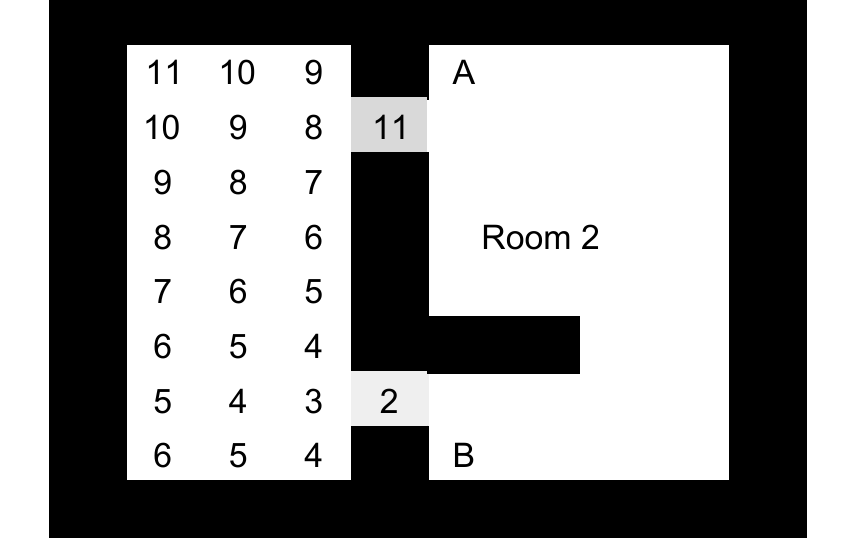}
         \caption{
         }
         \label{fig:R1gotoB}
      \end{subfigure}
    \caption{Knowing which object to go to, we can use the value function of that action, or an approximation of it, as a boundary value for the first action. Using the boundary values from object A in Figure~\ref{fig:ex1_1}(c) we get the value function of \emph{Goto room 2} shown in (a). Similarly, using the boundary values from object B in Figure~\ref{fig:ex1_1}(d) we get the value function of \emph{Goto room 2} shown in (b).
    }
    \label{fig:VF_using_next}
\end{figure}

The main contributions of this paper are as follows:

\begin{enumerate}
    \item For a set of two local RL-controllers (controllers created using RL) with given execution order, we show that if we use the value function of the second controller as a reward during the final step of the first controller, the resulting value function will satisfy the Bellman equation across the switching boundary, making the combined controller globally optimal.
    \item We extend this result to a larger set of local RL-controllers, under certain assumptions, making the combined controller globally optimal.
    \item  Given a mixed set of controllers designed using RL or with manual design principles, we extend the result above so that the global controller is optimal, under the constraint that the non-RL controllers are not changed.
\end{enumerate}

\red{ In relation to the recent publication, our approach solves a similar problem, but includes a proof of optimality, is not dependent on a user defined $\eta$-parameter, and can handle situations where information needs to be shared across more than one switching boundary, see Section 2.}

Before moving on we note two things: 
First, the proposed approach is applicable in any context where the switching boundaries between a set of policies are fixed, and some or all of those policies are implemented using RL. This includes settings such as using RL with the sequential behavior compositions in \cite{burridgeSequentialCompositionDynamically1999} and the consecutive policies of \cite{erskine2022developing}. \red{one of the most common setups in the literature that leads to such a set of switching boundaries is when BTs are used}\blue{ However, BTs are one of the most common setups that specifically lead to such static switching boundaries, 
and the computation needed to find these boundaries for an arbitrary BT can be found in \cite{ogrenBehaviorTreesRobot2022}. } Therefore, even though the results \red{are}\blue{might be} applicable outside a BT setting, we choose to present and motivate our work in the context of BTs. 

\blue{Second, note that in the example above, tailoring \emph{Goto room 2} to \emph{Goto A} makes the overall performance worse when combined with \emph{Goto B}, and vice versa. But what if you wanted to create a policy that performs well for both A and B? This can be done in two ways.
If the knowledge of what will come next can be included in the state, \emph{Goto room 2} knows whether A or B is next, and the suggested optimization will produce a policy that handles both cases optimally. 
If on the other hand the outcome of what will come next is unknown or random, we can optimize over an (estimated) distribution of subsequent policies. In the example above, if there is a 30\% probability of going to A and a 70\% probability of going to B, a weighted average of the value functions in Figures~\ref{fig:1c} and \ref{fig:1d}, ($0.3 v_A + 0.7 v_B$) can be used to optimize \emph{Goto room 2}, yielding a slight preference for the lower door, but not as strong as if it were known that B was always the target. Similarly, in the previous example of indoor navigation, we would learn that a high velocity on the boundary leads to a low reward with high probability as the value prediction decreases due to experienced collisions. Then, over time we would learn to slow down.

Thus, our approach can create a reusable policy, that is designed to optimize the expected reward over a given distribution of possible subsequent policies. This allows for modular use of actions within specific contexts, while improving expected performance, albeit with a possible trade-off in performance with respect to other contexts, that are not experienced in training or represented in the state. As seen in the examples above, our method yields the greatest performance improvement in scenarios where the locally optimal solution causes global performance degradation.
}

\red{However, a priori there is no such tradeoff. As long as a policy satisfies the same post-conditions, such as reaching room 2 in the example above, it can still be reused in other settings. Whether or not it is efficient depends on how similar the subsequent policies are.}

The remainder of this paper is organized as follows. The related work is described in Section \ref{sec:related_work}. A brief background is provided in Section \ref{sec:background}. Then the proposed approach including the theoretical results is given in Section \ref{sec:proposed_approach}, followed by a numerical example in Section \ref{sec:experimental_results}. Finally, conclusions are drawn in Section \ref{sec:conclusions}.

\section{Related Work}
\label{sec:related_work}

\red{The related work that is most similar to our approach is the recent paper}
\remove{\cite{erskine2022developing}.}
\red{There, the authors study a problem where an overall task is divided into}
\remove{$N$} 
\red{subtasks by some given transition signal}
\remove{$U(s_t)\in \{1, \ldots, N\}$.}
\red{It is not specified in}
\remove{\cite{erskine2022developing}}
\red{ how}
\remove{$U(s_t)$}
\red{is created, but it might be in the form of a BT as in our approach, making the two papers very similar. The authors of} \remove{\cite{erskine2022developing} } 
\red{also identify the problem of locally optimal tasks leading to far from optimal overall performance. Their solution is described as: "each agent is incentivized to cooperate with the next agent by training the agents policy network to produce actions that maximize both the current agents critic and the next agents critic, weighted by an introduced parameter, the cooperative ratio }
\remove{ ($\eta$)."}
\red{In their conclusion section, they point out two important items for future work: "the cooperative ratio variable }
\remove{$\eta$}
 \red{is required to be defined for each subtask" and If a specific action needs to be taken to solve the overall task and the subsequent agent is not aware of it, then CCP will still fail. Thus,}
\remove{\cite{erskine2022developing}}
\red{ is very similar to this paper in the sense that it identifies the same problem, and proposes a solution based on critics, which are estimates of the value function. However, }
\remove{\cite{erskine2022developing}}
\red{ is also different, in a number of important aspects: (a) They use a linear combination of the two critics across the entire set of states of the first controller, but we use the value function of the next task as reward only on the boundary. (b) They present no theoretical results, whereas we present conditions for convergence to the globally optimal policy. (c) They rely on a user defined parameter }
\remove{$\eta$,}
\red{  while we have no such parameter. (d) They are not able to solve problems where information is needed across more than one boundary, while our approach has no such issues. 
}

\red{The problem addressed in this paper is similar to the RL subfield known as Hierarchical Reinforcement Learning (HRL)}
\remove{\cite{pateriaHierarchicalReinforcementLearning2021}}\red{, which breaks a problem down to a set of sub-problems with respective subgoals to be reached. Each subgoal is then achieved with a separate subpolicy called a \emph{skill}.} 

RL has previously been used in BTs for training both the control switching mechanisms \cite{dey2013ql,pereira2015framework,hannaford2016simulation, fu2016reinforcement, zhang2017combining}, and the individual actions \cite{pereira2015framework} that constitute a BT. 
\blue{Many of these works handle this in the framework of Hierarchical Reinforcement Learning (HRL). 
However HRL is not applicable to our problem as the hierarchical controller we consider, with it's switching boundaries, is fixed, as given by the BT.}

In \cite{dey2013ql} the authors build a so-called Q-condition for the lowest level sequences of a BT, that can estimate the Q-value for each action in a sequence. Every tick of the BT evaluates the Q-conditions and reorders the actions in the sequences. The highest utility action of each sequence is used to determine the utility of the sequence, which can then be recursively used to reorder the entire tree, ensuring that the right action is executed at the right time.  
The concepts of \emph{learning action node} and \emph{learning composite node} are introduced in \cite{pereira2015framework}, using the Options framework \cite{suttonMDPsSemiMDPsFramework1999}  as the theoretical foundation. The learning action node encapsulates an independent RL problem with a complete definition of states, actions and rewards. In the case of the composite node, the authors use branches of the BT as actions in an RL problem to create control flow nodes. In a similar fashion, \cite{fu2016reinforcement,hannaford2016simulation,zhang2017combining} all use RL in slightly different ways to train fallbacks that optimize the choice between alternative options that achieve the same sub-goal.
Our work is different from \cite{fu2016reinforcement,hannaford2016simulation,zhang2017combining} in that we use RL to update the policies, whereas they use RL to decide when to use different policies.

As mentioned above,   
\cite{pereira2015framework} included an approach for using RL to create individual policies in a BT using a \emph{learning action node}, where a user defined MDP was created for each policy. However, these policies were solved separately, leading to the problem of locally optimal policies possibly being far from globally optimal, as discussed above.
Our work can be seen as an extension of \cite{pereira2015framework}, where, under certain assumptions, we can produce globally optimal policies by letting the reward on the switching boundary be given by the value function of the subsequent policy.

\blue{
The most similar related work can be found in 
\cite{erskine2022developing}, where the authors address the problem of local/global optimality in RL applied in different regions with fixed switching boundaries.
 However, their work is different from ours in the following aspects: (a) they provide no theoretical analysis, while we show convergence to the globally optimal policy. (b) they combine critics linearly across all states, while we use the next task's value function as reward only at the boundary. (c) they rely on a user-defined parameter ($\eta$), while our approach does not. (d) Their method cannot handle cases where preference information needs to flow across multiple switching boundaries, whereas our approach can.}

\blue{ 
Another paper that addresses a related problem is \cite{lee2018composing}. There, it is noted that switching between different controllers is sometimes problematic, as one controller might finish in a state that makes the subsequent controller fail. The proposed solution records success and failure data from experienced trajectories, and uses this to train transition policies to reach successful starting states before invoking the original controller. The approach in \cite{lee2018composing} does remove failures, but cannot optimize further than that. Our method can achieve near optimal performance, as measured by the MDP reward, whereas they focus on feasibility. We adopt the final part of a controller execution to the subsequent one, where they let the first controller finish and then execute a transition to the next skill. 
}

\section{Background}
\label{sec:background}
In this section we will give a very brief description of how BTs induce a partitioning of the statespace into different operation regions, as well as listing some key concepts and results from reinforcement learning. 

\subsection{Behaviour trees}
A BT represents a way of combining a set of sub-controllers into one overall controller in a way that is hierarchical and have been shown to be optimally modular \cite{biggarModularityReactiveControl2022}.
A recent survey can be found in \cite{iovino2022survey}, and technical overview in \cite{ogrenBehaviorTreesRobot2022}.

\red{A given BT induces a partition of the state space into a set of operating regions $\Omega_i$, where controller $i$ is executing when the state $s \in \Omega_i$.
}\remove{\cite{ogrenBehaviorTreesRobot2022}}

\blue{
 A given BT induces a partition of the state space into a hierarchical set of operating regions $\Omega_i$, where controller $i$ is executing when the state $s \in \Omega_i$ as seen in the following Lemma from \cite{ogrenBehaviorTreesRobot2022}.
\begin{lemma}
For a given node $j$, with operating region $\Omega_j$, controller $j$ is executed when $s \in \Omega_j$, and 
the operating regions of the children of $j$ is a partitioning of $\Omega_j$.
\end{lemma}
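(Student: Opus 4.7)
The plan is to prove this by structural induction on the behavior tree, using the standard tick semantics of composite nodes to characterize operating regions top-down from the root. First I would formalize $\Omega_j$ as the set of states $s$ at which a tick of the BT causes node $j$ to be ticked and, if $j$ is a leaf, its associated controller to be invoked; for a composite node this equals the union of the operating regions of the leaves in the subtree rooted at $j$.

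The base case fixes $\Omega_{\text{root}}$ as the entire state space (or the chosen design domain), since the root is ticked on every BT invocation. For the inductive step, assume $\Omega_j$ is already characterized and that $j$ is composite with children $c_1, \ldots, c_n$. I would split by node type. For a Sequence, child $c_i$ is ticked precisely when $s \in \Omega_j$ and each earlier child $c_k$, $k<i$, returns Success at $s$; denoting by $S_{c_k}$ its Success region, one obtains $\Omega_{c_i} = \Omega_j \cap \bigcap_{k<i} S_{c_k}$ restricted further to the subset where $c_i$ itself keeps execution below $j$, i.e., does not return a terminal status. Disjointness then follows because later children are only reached once earlier ones have succeeded, and coverage of $\Omega_j$ follows because every state in $\Omega_j$ triggers exactly one child to be in the Running state, by the definition of $\Omega_j$ as the strict-descent region. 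Fallback nodes are handled symmetrically with Success and Failure swapped, and Parallel (or memory) variants are treated by appealing to the convention from \cite{ogrenBehaviorTreesRobot2022} that selects a single running leaf per tick.

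The main obstacle will be the bookkeeping around terminal returns from $j$ itself, since states where $j$ returns Success or Failure should be excluded from its children's operating regions. I would resolve this by defining $\Omega_j$ to be exactly those states that induce strict descent below $j$ into a leaf controller, so that terminal outcomes at $j$'s own level do not appear in the partition. With this convention, which mirrors the one used in \cite{ogrenBehaviorTreesRobot2022}, both disjointness and coverage of $\Omega_j$ by $\{\Omega_{c_i}\}_{i=1}^n$ follow directly from the tick semantics, closing the induction.
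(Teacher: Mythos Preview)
Your approach is not wrong, but it differs fundamentally from what the paper does: the paper provides no argument at all and simply states that the lemma is a concatenation of Lemmas~7 and~8 of \cite{ogrenBehaviorTreesRobot2022}. So you are reconstructing a direct proof where the authors merely cite one. The structural-induction strategy you outline is the natural route and is almost certainly what the cited reference contains, so in spirit you are aligned with the underlying source rather than with this paper's one-line appeal to it.

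That said, your sketch has a definitional wobble you should tighten before calling it a proof. You first define $\Omega_j$ as the set of states in which $j$ is \emph{ticked}, but for a Sequence that would make the $\Omega_{c_i}$ nested rather than disjoint (whenever $c_2$ is ticked, $c_1$ was ticked too). You implicitly patch this by later redefining $\Omega_j$ as the strict-descent/Running region, which is the correct convention and the one used in \cite{ogrenBehaviorTreesRobot2022}; but the two definitions you give for leaves versus composites are not obviously consistent under your first formulation. If you commit from the outset to $\Omega_j=\{s:\text{$j$ is ticked and returns Running}\}$, then for a Sequence $\Omega_{c_i}=\Omega_j\cap\bigcap_{k<i}S_{c_k}\cap R_{c_i}$, disjointness is immediate (at most one child returns Running), and coverage follows because $j$ returns Running iff exactly one child does. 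The Fallback case is the dual. With that single fix your induction closes cleanly; the paper itself simply outsources this to the reference.
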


\begin{proof}
    This is a concatenation of Lemmas 7 and 8 of \cite{ogrenBehaviorTreesRobot2022}.
\end{proof}
The computation of the operating regions is a straightforward, but somewhat complex recursive operation on the tree structure, given by Definitions 10 and 11 in  \cite{ogrenBehaviorTreesRobot2022}.
}

 An illustration of this partitioning can be found in Figure~\ref{fig:7_oper_regions} below.
 In this paper we will investigate the interactions between controllers across the operating regions.

\subsection{MDPs and Reinforcement learning}

Let a Markov Decision Process (MDP) be defined as follows \cite[p232]{dietterichHierarchicalReinforcementLearning2000}:

\begin{definition}
\label{def:MDP}
(Markov Decision Process). An MDP is a 4-tuple
\begin{equation}\label{eq:markovDecisionProcess}
(S, A, p, r),
\end{equation}
where $S$ is a set of states, $A$ is a set of actions, with $A(s)\subset A$ the set of actions available at state $s$, $p(s'|s,a)=p(s_{t+1}=s'|s_t=s,a_t=a)$ is the probability of state transitions,
and
$r(s,s',a)$ is the reward for transitioning to state $s'$ from state $s$ applying action $a$. 
\end{definition}
A policy, $\pi:S \rightarrow A$, assigns an action to each state.
The value function, $v^{\pi}:S\rightarrow \mathbb{R}$, of a policy $\pi$ is the expected cumulative reward gained by the policy,
\begin{equation}\label{xx}
v^{\pi}(s)= \mathbb{E} \{ r_t + \gamma r_{t+1} + \gamma^2 r_{t+2}+ \ldots | s_t=s, \pi\},
\end{equation}
and satisfies the Bellman equation
\begin{align}
    v^{\pi}(s)&=  \sum\limits_{s'}p(s'|s,\pi(s))\  [r(s,s',\pi(s)) + \gamma v^{\pi}(s')].
\end{align}

A policy that maximizes $v^{\pi}$ is called an optimal policy $\pi^*$, and the corresponding value function, the
optimal value function $v^*$,
is the unique solution to the Bellman optimality equation \cite[p233]{dietterichHierarchicalReinforcementLearning2000}
\begin{align}
    v^*(s)&= \max_a \sum\limits_{s'}p(s'|s,a)[r(s,s',a) + \gamma v^*(s')]. \label{eq_Bellman_opt}
\end{align}

An optimal policy can be found from the optimal value function as \cite[p233]{dietterichHierarchicalReinforcementLearning2000} 
\begin{equation}
    \pi^*(s) = \text{argmax}_{a} \sum\limits_{s'}p(s'|s,a)[r(s,s',a) + \gamma v^*(s')]. \label{eq_opt_policy}
\end{equation}

\section{Proposed approach}
\label{sec:proposed_approach}
In this section we will provide the main result of the paper.
\blue{
The intended use of the results is to let the sets $\Omega_\alpha,\Omega_\beta$ below correspond to different operating regions  $\Omega_i$ of a BT.
Then, Lemma~\ref{lemma_main} shows how the policy in $\Omega_\beta$ can be designed independently of $\Omega_\alpha$, and exactly how the policy in $\Omega_\alpha$ must take $\Omega_\beta$ into account to provide overall optimality. Furthermore, Lemma~\ref{lem:comb_man_rl}
 shows how to handle the case when the policy in $\Omega_\beta$ is  already given. Finally, Lemma~\ref{lem_recursive} shows how to recursively extend these results to BTs with many operating regions.
}

\blue{
\begin{definition}[MDP-neighbors]
    Given an MDP, we say that two disjoint sets $\Omega_\alpha,\Omega_\beta \subset S$ are MDP-neighbors if there are two states $s_\alpha\in \Omega_\alpha, s_\beta \in \Omega_\beta$ and an action $a \in A$ such that the transition probability $p(s_\beta|s_\alpha, a) \neq 0$, 
    i.e. the MDP can transition from $\Omega_\alpha$ to $\Omega_\beta$ in a single step.
\end{definition}
}

\begin{assumption}
\label{ass:two_sets}
    \red{
    Assume that the two sets $\Omega_\alpha,\Omega_\beta \subset S$ are disjoint and neighboring each other in the sense that a given MDP can transition from $\Omega_\alpha$ to $\Omega_\beta$ in a single step. 
        Furthermore, assume 
    }  
    \blue{
    Assume that $\Omega_\alpha,\Omega_\beta$ are MDP-neighbors for some MDP and}
    that every trajectory of an optimal policy 
    
    1) ends in $\Omega_\beta$ with a finite accumulated reward,
    
    2) if it is in $\Omega_\beta$ it will not leave $\Omega_\beta$, 
    
    3) if it starts in $\Omega_\alpha$ it will transition to $\Omega_\beta$ without entering some other part of $S$ first.
\end{assumption}

Note that the above assumption is satisfied for many problems with a large positive reward for transition to some states inside $\Omega_\beta$ ending the episode, and a smaller negative reward for all other transitions inside $\Omega_\alpha \cup \Omega_\beta$. This might correspond to a problem where the policy should reach a goal region inside $\Omega_\beta$ using e.g., minimum time or minimum energy. The two regions might correspond to a \emph{move to} action and a \emph{pick/push object} action, where it is clear that you have to move to the object before picking/pushing it.
\blue{
However, there are also many combinations of MDPs and regions that do not satisfy Assumption 1, i.e., when the optimal policy passes the switching boundary multiple times, so it needs to be checked for each case.
}

We will use the following definition to first solve a smaller MDP in $\Omega_\beta$ and then solve another smaller MDP in $\Omega_\alpha$, using the value function of the first one as part of the reward. We call the two smaller MDPs \emph{restrictions} of the original MDP. We want them to be very similar to the larger MDP, but since an MDP cannot have transitions out of the state set, we need to add a set of states $S_{add}$ along the boundary that are absorbing (no transitions out), and give no rewards, see below.

\begin{definition}[Restriction]
    By the restriction of a MDP $P_0=(S,A,p,r)$ to $\hat S \subset S$ we mean a new MDP $\bar P=(\bar S,\bar A,\bar p,\bar r)$ with a smaller set of states $\bar S = \hat S \cup S_{add}$, where 
    $$
    S_{add}=\{s'\in S \setminus \hat S: \exists s\in \hat S,a \in A(s) \land p(s'|s,a) \neq 0 \}.
    $$ 
    The available actions are the same $\bar A(s) = A(s)$. The transition probability is given by 
\begin{equation}
\bar p(s'|s,a)= 
\begin{cases}
    p(s'|s,a),& \text{if } s \in \hat S  \label{eq_prob}\\
    1,& \text{if } s=s', s \in S_{add}\\
    0,              & \text{otherwise.}
\end{cases}
\end{equation}
Note that this makes the states in $S_{add}$ absorbing, i.e. they can never be exited.
The reward is given by 
\begin{equation}
\bar r(s,s',a)= 
\begin{cases}
    r(s,s',a),& \text{if } s, s' \in \hat S  \label{eq_reward}\\
    r(s,s',a) + v_+(s'),& \text{if } s \in \hat S,  s' \in S_{add}\\
    0,              & \text{if } s \in S_{add},
\end{cases}
\end{equation}
where $v_+(s')$ is a given function impacting the reward when transitioning from $\hat S$ to $S_{add}$.
\end{definition}
Below we will use $v_+$ to penalise undesired transitions, and reward desired transitions based on the value function in the destination state.

\begin{lemma}[Decoupled solutions]
\label{lemma_main}
Given an MDP $\mbox{P}_0=(S,A,p_0,r_0)$, with optimal value function $v_0^*(s)$, let Assumption \ref{ass:two_sets} hold for two sets $\Omega_\alpha,\Omega_\beta$.

Let the MDP $P_\beta=( S_\beta, A_\beta, p_\beta, r_\beta)$ be the restriction of 
$P_0$ to $\Omega_\beta$ with 
\begin{equation}
    v_+(s')=-\infty,  \label{eq_v_bar_beta}
\end{equation}
having the corresponding optimal value function $v_\beta^*$.

Let the MDP $P_\alpha=( S_\alpha, A_\alpha, p_\alpha, r_\alpha)$ be the restriction of 
$P_0$ to $\Omega_\alpha$ with 
\begin{equation}
v_+(s')= 
\begin{cases}
    \gamma v_\beta^*(s'),& \text{if } s'\in \Omega_\beta  \label{eq_v_bar_alpha}\\
    -\infty,& \text{otherwise } 
\end{cases}
\end{equation}
having the corresponding optimal value function $v_\alpha^*$.

Then the optimal value functions of $P_\alpha$ and $P_\beta$ are identical to the optimal value function of $P_0$ in 
$\Omega_\alpha$ and $\Omega_\beta$ respectively. That is

\begin{equation}
v_0^*(s)= 
\begin{cases}
    v_{\alpha}^*(s),& \text{if } s \in \Omega_\alpha \label{eq_lemma}\\
    v_{\beta}^*(s),& \text{if } s \in \Omega_\beta
\end{cases}
\end{equation}
\end{lemma}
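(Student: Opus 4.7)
The plan is to prove the two equalities in (\ref{eq_lemma}) in the natural order: first $v_\beta^* = v_0^*$ on $\Omega_\beta$, then use that result to show $v_\alpha^* = v_0^*$ on $\Omega_\alpha$. Both halves rest on the fact that the optimal value function is the \emph{unique} solution to the Bellman optimality equation (\ref{eq_Bellman_opt}), so it suffices to exhibit a function satisfying the right Bellman equation on the right domain.

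For the first equality, I would start by using parts 1 and 2 of Assumption~\ref{ass:two_sets}: once an optimal trajectory of $P_0$ enters $\Omega_\beta$, it stays there and terminates with finite accumulated reward. Consequently the restriction of $v_0^*$ to $\Omega_\beta$ is characterized by a Bellman equation that only references states inside $\Omega_\beta$. The restricted MDP $P_\beta$ has the same transitions and rewards inside $\Omega_\beta$ as $P_0$, and the choice $v_+(s') = -\infty$ in (\ref{eq_v_bar_beta}) makes it strictly suboptimal to transition into $S_{add}$ (outside $\Omega_\beta$); since the optimal policy of $P_0$ never made such a transition anyway, the optimal policy of $P_\beta$ will not either, so $v_0^*|_{\Omega_\beta}$ satisfies the Bellman equation for $P_\beta$ and uniqueness gives $v_0^* = v_\beta^*$ on $\Omega_\beta$.

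For the second equality, I would use part 3 of the assumption to conclude that an optimal trajectory starting in $\Omega_\alpha$ either stays in $\Omega_\alpha$ or crosses directly into $\Omega_\beta$. I then expand the Bellman equation for $v_0^*$ at a state $s \in \Omega_\alpha$ by splitting the sum over $s'$ into the pieces $s' \in \Omega_\alpha$ and $s' \in \Omega_\beta$ (optimal actions assign zero probability to everything else). On $\Omega_\beta$ I substitute $v_0^*(s') = v_\beta^*(s')$ using the first step. The resulting equation matches the Bellman equation for $P_\alpha$ on $\Omega_\alpha$: inside $\Omega_\alpha$ the restricted transitions and rewards agree with $P_0$, the absorbing states in $S_{add}\cap \Omega_\beta$ have $v_\alpha^*=0$ and contribute a one-shot reward $r_0(s,s',a)+\gamma v_\beta^*(s')$ per (\ref{eq_v_bar_alpha}), while transitions to $S_{add}\setminus \Omega_\beta$ receive $-\infty$ and are never chosen. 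Hence $v_0^*|_{\Omega_\alpha}$ solves the Bellman equation for $P_\alpha$, and uniqueness again gives $v_0^* = v_\alpha^*$ on $\Omega_\alpha$.

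The main obstacle is the bookkeeping at the boundary: I must justify carefully that (i) the Bellman maximum in $P_\alpha$ is never attained on an action that sends probability into $S_{add}\setminus \Omega_\beta$, which requires that the original optimal policy respects this too (this is where part 3 of Assumption~\ref{ass:two_sets} is essential), and (ii) the absorbing states contribute exactly $\gamma v_\beta^*(s')$ and nothing else, which follows from $\bar r(s,s',a)=0$ for $s\in S_{add}$ and from the absorbing transition $\bar p(s'|s,a)=\mathbf{1}[s=s']$ giving $v_\alpha^*(s')=0$ on $S_{add}$. Once these two points are made precise, the rest of the argument is the standard uniqueness statement for the Bellman optimality equation and needs no further computation.
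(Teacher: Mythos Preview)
Your proposal is correct and follows essentially the same route as the paper's proof: both argue via uniqueness of the solution to the Bellman optimality equation, first showing that $v_0^*|_{\Omega_\beta}$ satisfies (\ref{eq_Bellman_opt}) for $P_\beta$ (using parts 1--2 of Assumption~\ref{ass:two_sets} and the $-\infty$ penalty), and then that $v_0^*|_{\Omega_\alpha}$ satisfies (\ref{eq_Bellman_opt}) for $P_\alpha$ by splitting the sum over $s'$ and checking that the absorbing $S_{add}$ states contribute exactly $r_0(s,s',a)+\gamma v_\beta^*(s')$. The two bookkeeping points you flag are precisely the ones the paper works through explicitly.
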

Note that the lemma above states that we can solve $P_0$ by first solving the smaller MDP $P_\beta$ over $\Omega_\beta$ and then solving $P_\alpha$  over $\Omega_\alpha$ using optimal value function $v_\beta^*(s')$ on the boundary as $v_+$. Also note that once we have the optimal value function, we can easily find an optimal policy through Equation (\ref{eq_opt_policy}).

\begin{proof}

The optimal value function that solves Equation (\ref{eq_Bellman_opt}) for a given MDP is known to be unique \cite{dietterichHierarchicalReinforcementLearning2000}. Therefore we can assume that $v_0^*$ of $P_0$ is known, and if we can construct solutions $v_\alpha^*$ of $P_\alpha$ and $v_\beta^*$ of $P_\beta$  that satisfies Equations
(\ref{eq_Bellman_opt}) and  (\ref{eq_lemma}) we are done.

We start by looking at $P_\beta$. 
Let 
\begin{equation}
v_\beta^*(s)= 
\begin{cases}
    v_{0}^*(s),& \text{if } s \in \Omega_\beta \label{eq_vj}\\
    0,& \text{if } s \in S_{add}
\end{cases}
\end{equation}
Clearly this satisfies Equation $(\ref{eq_lemma})$, so it remains to show that it satisfies the Bellman optimality equation (\ref{eq_Bellman_opt}) for $P_\beta$.

We will show that the equations for both $P_0$ and $P_\beta$ do not depend on the values outside $\Omega_\beta$, and since the values inside $\Omega_\beta$ are identical, (\ref{eq_Bellman_opt}) must hold for $P_\beta$ if it holds for $P_0$.

For $P_0$ we know by Assumption~\ref{ass:two_sets} that any trajectory of an optimal policy $\pi_0^*$ of $P_0$ will remain in $\Omega_\beta$. Therefore $\pi_0^*$ must be such that $p(s'|s,\pi_0^*(s))=0$ for all $s\in \Omega_\beta, s' \not \in \Omega_\beta$. 

We also know that $\pi_0^*$ satisfies (\ref{eq_opt_policy}), therefore, for  $s\in \Omega_\beta$, the action $a$ that maximizes the right hand side of (\ref{eq_opt_policy}) is such that $p(s'|s,a)=0$ for all $s\in \Omega_\beta, s' \not \in \Omega_\beta$. 
In (\ref{eq_Bellman_opt}) the same sum is maximized over $a$, and therefore 
$p(s'|s,a)=0$ in (\ref{eq_Bellman_opt}) 
as well.

Thus, by (\ref{eq_Bellman_opt}), the values of $v_0^*(s)$ for $s\in \Omega_\beta$ do not depend on the values of $v_0^*(s)$ for $s\not \in \Omega_\beta$.

For $P_\beta$ we have that states outside $\Omega_\beta$ are the absorbing states $S_{add}$. The reward of transferring to those is $-\infty$, since $\bar r(s,s',a)=r(s,s',a)+v_+(s')=r(s,s',a)-\infty=-\infty$ by (\ref{eq_reward}).
We know that there exists a policy that avoids leaving $\Omega_\beta$ and results in a finite accumulative reward, since the available actions are the same, $\bar A(s)=A(s)$. Since  leaving $\Omega_\beta$ has a reward of $-\infty$ we conclude that the optimal policy for $P_\beta$ does not leave $\Omega_\beta$, and by the argument above, the values of $v_\beta^*(s)$ for $s\in \Omega_\beta$ do not depend on the values of $v_\beta^*(s)$ for $s\not \in \Omega_\beta$.
This concludes the proof regarding $P_\beta$.

For $P_\alpha$ we will explore how it depends on values outside $\Omega_\alpha$.
First we note that $v_\alpha^*(s)=0$ for $s\in S_{add}$ since $S_{add}$ are absorbing (no transitions out) and have reward $0$ for staying by (\ref{eq_reward}).
If $s\in \Omega_\alpha, s' \in \Omega_\beta$ we note the following 
\begin{align}
    &p_\alpha(s'|s,a)[r_\alpha(s,s',a) + \gamma v_\alpha^*(s')],  \\
    &p_\alpha(s'|s,a)[(r_0(s,s',a) +v_+(s') )+\gamma v_\alpha^*(s')] = \\
    &p_\alpha(s'|s,a)[(r_0(s,s',a) + \gamma v_\beta^*(s')) + 0] = \\
    &p_0(s'|s,a)[r_0(s,s',a) + \gamma v_0^*(s')]  \label{eq_same_rhs}
\end{align}
where we have used (\ref{eq_reward}) in the first equality,
$v^*_\alpha(s')=0$ and  (\ref{eq_v_bar_alpha}) in the second equality, and finally
(\ref{eq_prob}) and (\ref{eq_vj})    to get the last equality.

Thus we see that the right hand side of (\ref{eq_Bellman_opt}) is the same for $P_0$ and $P_\alpha$ for $s\in \Omega_\alpha, s' \in \Omega_\beta$.

For transitions inside $\Omega_\alpha$, i.e. $s,s'\in \Omega_\alpha$ the right hand sides are clearly the same. Transitions leaving $\Omega_\alpha \cup \Omega_\beta$ are handled as above as we know the optimal policy of $P_0$ never leaves $\Omega_\alpha \cup \Omega_\beta$, and the reward in $P_\alpha$ for leaving $\Omega_\alpha \cup \Omega_\beta$ is $-\infty$. Thus values outside $\Omega_\alpha \cup \Omega_\beta$ does not influence (\ref{eq_Bellman_opt}) for $s \in \Omega_\alpha$, and states $s' \in \Omega_\beta$ produce identical contributions by (\ref{eq_same_rhs}).
Therefore, if $v_0^*(s)$ satisfies (\ref{eq_Bellman_opt}) of $P_0$ so does $v_\alpha^*(s)$ of $P_\alpha$.
\end{proof}

\begin{example1}
\label{ex_ball1}
Consider a ball-shaped agent that is to push a box to a given goal region, modelled as an MDP $P_0$.
It is clear that each successful policy must first move to the box, and then push it into the goal region (as you cannot push without being close to something).
Thus we can divide $S$ into $\Omega_1$ (not close to box) and $\Omega_2$ (close to box). 

Lemma \ref{lemma_main} now tells us that we can first solve the ball pushing MDP $P_2$, and then solve the move to MDP $P_1$ using the value function from $P_2$ (which basically describes what positions around the box are beneficial for pushing it to the goal), as input.

Having done this, Lemma \ref{lemma_main} tells us that the optimal value functions of $P_0$ will be the same as $P_1, P_2$ on $\Omega_1,\Omega_2$ correspondingly. Since the optimal policy can be found from the optimal value function through (\ref{eq_opt_policy}), we have found the optimal policy to the original problem by solving the two smaller problems.

A detailed version of this example can be found in Section~\ref{sec:experimental_results} below.
\end{example1}

\begin{figure}
    \centering
    \includegraphics[width=3cm]{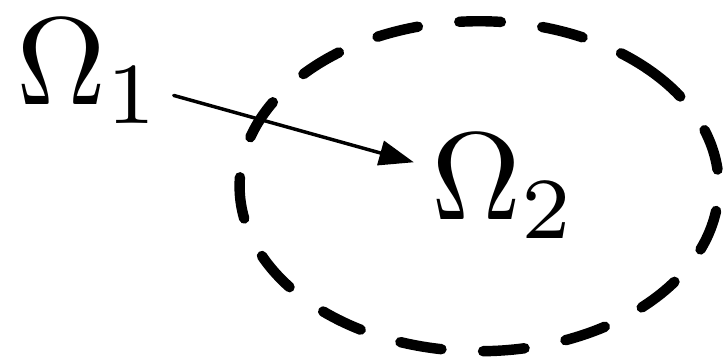}
    \caption{The basic case with $\Omega_\alpha=\Omega_1$ and $\Omega_\beta=\Omega_2$. The arrow indicates that trajectories of an optimal policy will always move from $\Omega_1$ to $\Omega_2$ and never in the opposite direction.}
    \label{fig:2_oper_regions}
\end{figure}

The next result concerns the case where we have a MDP and want to use an existing policy for parts of the solution. Examples include a classical PID controller, inverse kinematics for bringing a robot arm to some configuration, or some other policy that we want to reuse.

\begin{lemma}[Constraining the optimal policy]
\label{lemma_constraining}
Let the MDP $P_0=(S,A(\cdot),p,r)$ and a  policy $\pi_g:S \rightarrow A$ that we must execute in some domain $\Omega_g \subset S$ be given.

If we define a new function $\bar A:S \rightarrow A$ such that
\begin{equation}
\bar A(s)= 
\begin{cases}
    \{\pi_g(s)\},& \text{if } s \in \Omega_g \label{eq_a_bar}\\
    A(s),& \text{otherwise, }
\end{cases}
\end{equation}
then the optimal solution $\pi_1^*(s)$ of
the new MDP $P_1=(S,\bar A(\cdot),p,r)$
is such that
$\pi_1^*(s)=\pi_g(s)$ when $s \in \Omega_g$.
\end{lemma}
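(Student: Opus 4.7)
The plan is essentially to observe that this lemma is a direct consequence of how the action set $\bar A$ is constructed, rather than requiring any Bellman-equation argument. The restriction $P_1=(S,\bar A(\cdot),p,r)$ differs from $P_0$ only in the set of admissible actions, and by the definition of a policy for an MDP, any policy $\pi:S \rightarrow A$ of $P_1$ must satisfy $\pi(s) \in \bar A(s)$ for every $s$. The key observation is that, for $s \in \Omega_g$, the admissible set $\bar A(s) = \{\pi_g(s)\}$ is a singleton.

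First I would note (or state as a prerequisite) that $\pi_g(s) \in A(s)$ for $s \in \Omega_g$, so that $P_1$ is a well-defined MDP with a non-empty action set at every state. Second I would invoke Definition \ref{def:MDP} to recall that a policy assigns an action in the available set at each state. Third, applying this to the optimal policy $\pi_1^*$ of $P_1$, we have $\pi_1^*(s) \in \bar A(s)$ for all $s \in S$; in particular, for $s \in \Omega_g$, $\pi_1^*(s) \in \{\pi_g(s)\}$, which forces $\pi_1^*(s)=\pi_g(s)$. Existence of an optimal policy for $P_1$ follows from standard MDP theory (e.g., the Bellman optimality equation \eqref{eq_Bellman_opt}) applied to $P_1$, since constraining the action set preserves the structure needed.

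There is essentially no obstacle: the argument is a one-line consequence of the singleton constraint. The only thing worth being explicit about is that the lemma does not claim anything about $\pi_1^*(s)$ outside $\Omega_g$, where $\bar A(s)=A(s)$ leaves the full set of actions available and the optimal choice is determined by \eqref{eq_opt_policy} applied to $P_1$. The natural companion observation, which motivates why this lemma is useful together with Lemma \ref{lemma_main}, is that the resulting $v_1^*$ can then be used as the value function $v_\beta^*$ in a larger decomposition where some regions host fixed handcrafted controllers and others are learned.
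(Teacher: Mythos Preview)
Your proposal is correct and follows the same approach as the paper: both arguments simply observe that $\bar A(s)=\{\pi_g(s)\}$ is a singleton for $s\in\Omega_g$, so any policy of $P_1$, and hence in particular the optimal one, must select $\pi_g(s)$ there. The paper's proof is just the one-line version of what you wrote.
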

\begin{proof}
\blue{Any policy in the new MDP must satisfy $\pi(s)=\pi_g(s)$ when $s \in \Omega_g$, as this is the only available action.
Thus it also holds for the optimal policy.}

\red{    This is clear since $\pi_g(s)$ is the only available action choice in $\Omega_g$.}
\end{proof}

\begin{lemma}[Combining manual and learned sub-polices]
\label{lem:comb_man_rl}
Given a MDP $P_0=(S,A,p,r)$ and a given policy $\pi_g:S \rightarrow A$ that we want to execute in some domain $\Omega_g\subset S$.

If we constrain the available actions $A(\cdot)$ as described in Lemma \ref{lemma_constraining}
we can still apply Lemma \ref{lemma_main} \blue{if the new MDP also satisfies Assumption \ref{ass:two_sets}}.
\end{lemma}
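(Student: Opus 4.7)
The plan is very short: observe that the constrained object $P_1=(S,\bar A,p,r)$ constructed in Lemma~\ref{lemma_constraining} is itself a legitimate MDP, and then invoke Lemma~\ref{lemma_main} on $P_1$ in place of $P_0$. Since the statement of this lemma already assumes that $P_1$ satisfies Assumption~\ref{ass:two_sets} for the chosen regions $\Omega_\alpha,\Omega_\beta$, all the hypotheses of Lemma~\ref{lemma_main} are met, so nothing new actually has to be re-derived.

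Concretely, first I would check that $P_1$ is well-defined as an MDP: the state space $S$ is unchanged, the transition kernel $p(s'|s,a)$ and reward $r(s,s',a)$ are defined for every $a \in \bar A(s) \subseteq A(s)$, and $\bar A(s)$ is never empty (it is $\{\pi_g(s)\}$ on $\Omega_g$ and $A(s)$ elsewhere). Then I would form the restrictions $P_\alpha,P_\beta$ of $P_1$ (using $\bar A$, not $A$) with the boundary rewards (\ref{eq_v_bar_beta}) and (\ref{eq_v_bar_alpha}) exactly as in Lemma~\ref{lemma_main}, solve $P_\beta$ first, feed its optimal value function $v^*_\beta$ into the boundary reward for $P_\alpha$, and conclude via Lemma~\ref{lemma_main} that the combined value function equals $v^*_{P_1}$ on $\Omega_\alpha\cup\Omega_\beta$.

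Finally, I would use Lemma~\ref{lemma_constraining} to note that any optimal policy recovered from $v^*_{P_1}$ through~(\ref{eq_opt_policy}) automatically agrees with $\pi_g$ on $\Omega_g$, since $\bar A(s)=\{\pi_g(s)\}$ leaves no other choice in~(\ref{eq_opt_policy}). This gives the desired decoupled solution while honoring the manual-policy constraint.

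The only real obstacle worth flagging is the assumption itself: replacing $A$ by $\bar A$ can, in principle, break Assumption~\ref{ass:two_sets} for $P_1$ even when it held for $P_0$, because a forced action in $\Omega_g\cap\Omega_\alpha$ might push trajectories out of $\Omega_\alpha\cup\Omega_\beta$ or back and forth across the boundary. That is precisely why the hypothesis ``the new MDP also satisfies Assumption~\ref{ass:two_sets}'' is included explicitly; no further argument is needed inside the proof, but a remark pointing out that this condition must be checked case by case (e.g.\ ensuring $\pi_g$ is compatible with the one-way transition from $\Omega_\alpha$ to $\Omega_\beta$) would make the lemma practically useful.
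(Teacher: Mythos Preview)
Your proposal is correct and follows the same approach as the paper: you observe that Lemma~\ref{lemma_constraining} produces a bona fide MDP $P_1$, and since the hypothesis explicitly requires $P_1$ to satisfy Assumption~\ref{ass:two_sets}, Lemma~\ref{lemma_main} applies directly to it. Your additional checks (that $\bar A(s)$ is nonempty, that the recovered policy agrees with $\pi_g$ on $\Omega_g$, and the caveat about the assumption needing case-by-case verification) are all sound elaborations but not strictly necessary for the argument.
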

\begin{proof}
    This is clear since applying Lemma \ref{lemma_constraining} just produces a new MDP, \blue{and Lemma \ref{lemma_main} can be applied to any MDP satisfying Assumption \ref{ass:two_sets}}.
\end{proof}

\begin{remark}
Note that this can be practical if there exists a manually designed controller, e.g., a PID- or LQR controller that we want to combine with a learned controller in an optimal way.
If $\Omega_g=\Omega_\beta$, the optimal policy in all of $\Omega_\beta$ is already known, and we can compute the optimal value function $v_\beta^*(s)$ in $\Omega_\beta$ using e.g. policy evaluation, as suggested in \cite{suttonReinforcementLearningIntroduction2018}.
\end{remark}

\begin{example1}
\label{ex_ball2}
    Looking back at the ball pushing problem in Example~\ref{ex_ball1}, we might have an existing pushing policy that we want to use. If that is the case we can constrain the available actions according to this policy, compute the value function for the given policy, and then find the optimal move to behavior using the value function of the manual push-behavior. A detailed version of this example can be found in Section \ref{sec:experimental_results} below.
\end{example1}

\begin{lemma}[Recursive application over many policies]
\label{lem_recursive}
Given a MDP $P_0=(S,A,p,r)$ where $S$ is divided into a set of disjoint operating regions $\Omega_i$, such that $S=\cup_i \Omega_i$, as illustrated in Figure~\ref{fig:7_oper_regions}.
Assume the optimal policy has a finite accumulated reward from all starting states.

Let $M\subset \mathbb{N}$ be the indices of existing policies $\pi_i: \Omega_i\rightarrow  A$ for  $i \in M$ we want to use.

First we constrain the MDP with respect to these controllers, according to Lemma~\ref{lemma_constraining}.

If the $\Omega_i$ are numbered such that transitions of optimal policies will always happen from a lower index to a higher index \blue{in this constrained MDP}, we improve the policy in any region $\Omega_i, i \not \in M$ by letting $\Omega_\alpha=\Omega_i$ and $\Omega_\beta=\cup_{j>i}\Omega_j$, and applying Lemma~\ref{lemma_main}.

If we recursively apply this strategy backwards from the highest index, we will recreate the globally optimal policy.
\end{lemma}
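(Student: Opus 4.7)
The plan is to first apply Lemma~\ref{lemma_constraining} to fix the given policies on $\cup_{i\in M}\Omega_i$, producing a constrained MDP $\tilde P_0$ with optimal value function $\tilde v_0^*$, and then to show by backward induction on $i$ that the recursive procedure reproduces $\tilde v_0^*$ region by region. Let $n$ denote the largest index used and set $U_i := \cup_{j>i}\Omega_j$. The induction hypothesis after processing indices $n, n{-}1, \ldots, i{+}1$ will be that the value function already stored on $U_i$ coincides with the restriction of $\tilde v_0^*$ to $U_i$.

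For the base case $i=n$ the set $U_n$ is empty, so I would handle $\Omega_n$ as a standalone $\Omega_\beta$-type region: the ordering assumption says optimal trajectories never leave $\Omega_n$ once inside, so the same argument used for $P_\beta$ inside the proof of Lemma~\ref{lemma_main} (trajectories confined to $\Omega_\beta$, boundary reward $-\infty$) shows that solving the restriction of $\tilde P_0$ to $\Omega_n$ recovers $\tilde v_0^*$ on $\Omega_n$. If $n\in M$ the policy is already forced to be $\pi_n$ after Lemma~\ref{lemma_constraining}, and the corresponding value function is obtained by policy evaluation of $\pi_n$ inside $\Omega_n$ instead; the same confinement argument shows it again equals $\tilde v_0^*$ on $\Omega_n$.

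For the inductive step I would instantiate Lemma~\ref{lemma_main} on $\tilde P_0$ with $\Omega_\alpha := \Omega_i$ and $\Omega_\beta := U_i$. The main work is verifying Assumption~\ref{ass:two_sets} for this \emph{amalgamated} pair: (1) finite accumulated reward is hypothesized, and since the ordering forces optimal trajectories to climb monotonically through the indices they must ultimately terminate inside $U_i$; (2) a trajectory already in some $\Omega_j \subset U_i$ can only transition to $\Omega_k$ with $k>j>i$, hence stays in $U_i$; (3) any transition out of $\Omega_i$ under the optimal policy goes to an index strictly greater than $i$, i.e.\ directly into $U_i$, with MDP-neighborhood being automatic whenever an optimal trajectory actually leaves $\Omega_i$. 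With Assumption~\ref{ass:two_sets} in hand, the case $i\notin M$ is exactly Lemma~\ref{lemma_main}: solving the restriction to $\Omega_i$ with boundary term $v_+(s')=\gamma\,\tilde v_0^*(s')$ on $U_i$ (which by the induction hypothesis coincides with the stored value function) yields $\tilde v_0^*$ on $\Omega_i$. For $i\in M$ the action set on $\Omega_i$ is the singleton $\{\pi_i(s)\}$ by construction, so policy evaluation with the same boundary data reproduces $\tilde v_0^*$ on $\Omega_i$ by the same decoupling argument as in Lemma~\ref{lemma_main}.

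The main obstacle I anticipate is the Assumption~\ref{ass:two_sets} check for the composite set $U_i$ rather than a single operating region, and this is precisely where the monotone-index hypothesis is consumed. A subtlety worth highlighting in writing it out is that the constraints imposed by Lemma~\ref{lemma_constraining} can change which transitions the optimal policy takes, so the ``lower to higher index'' property must hold in the constrained MDP $\tilde P_0$ and cannot simply be inherited from $P_0$; the lemma statement already asks for this. Once the induction reaches $i=1$, the stored value function equals $\tilde v_0^*$ on all of $S=\cup_i\Omega_i$, and extracting actions via~(\ref{eq_opt_policy}) region by region yields the globally optimal policy subject to the $\pi_i$, $i\in M$, being retained unchanged.
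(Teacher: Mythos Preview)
Your proposal is correct and follows essentially the same approach as the paper: verify Assumption~\ref{ass:two_sets} for each pair $(\Omega_i,\cup_{j>i}\Omega_j)$ using the monotone-index hypothesis and the finite-reward assumption, then apply Lemma~\ref{lemma_main} backwards from the highest index. The paper's own proof is extremely terse (three sentences), so your explicit induction, the separate treatment of the base case and of indices $i\in M$ via policy evaluation, and your remark that the monotone-index property must hold in the constrained MDP $\tilde P_0$ are all welcome elaborations rather than genuine departures.
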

\begin{proof}
    Since we know that the optimal policy has a finite accumulated reward, and all transitions of the optimal policy will happen to a region with higher index $i$, items 1, 2 and 3 of Assumption \ref{ass:two_sets} is satisfied for all $\Omega_\alpha, \Omega_\beta$ constructed as above. If the assumption is satisfied, we can apply the lemma to improve performance. Since the solution in $\Omega_\alpha$ depends on the solution in $\Omega_\beta$, we will get the optimal solution by starting with the highest index and going backwards.
\end{proof}

\begin{figure}
    \centering
    \includegraphics[width=4.5cm]{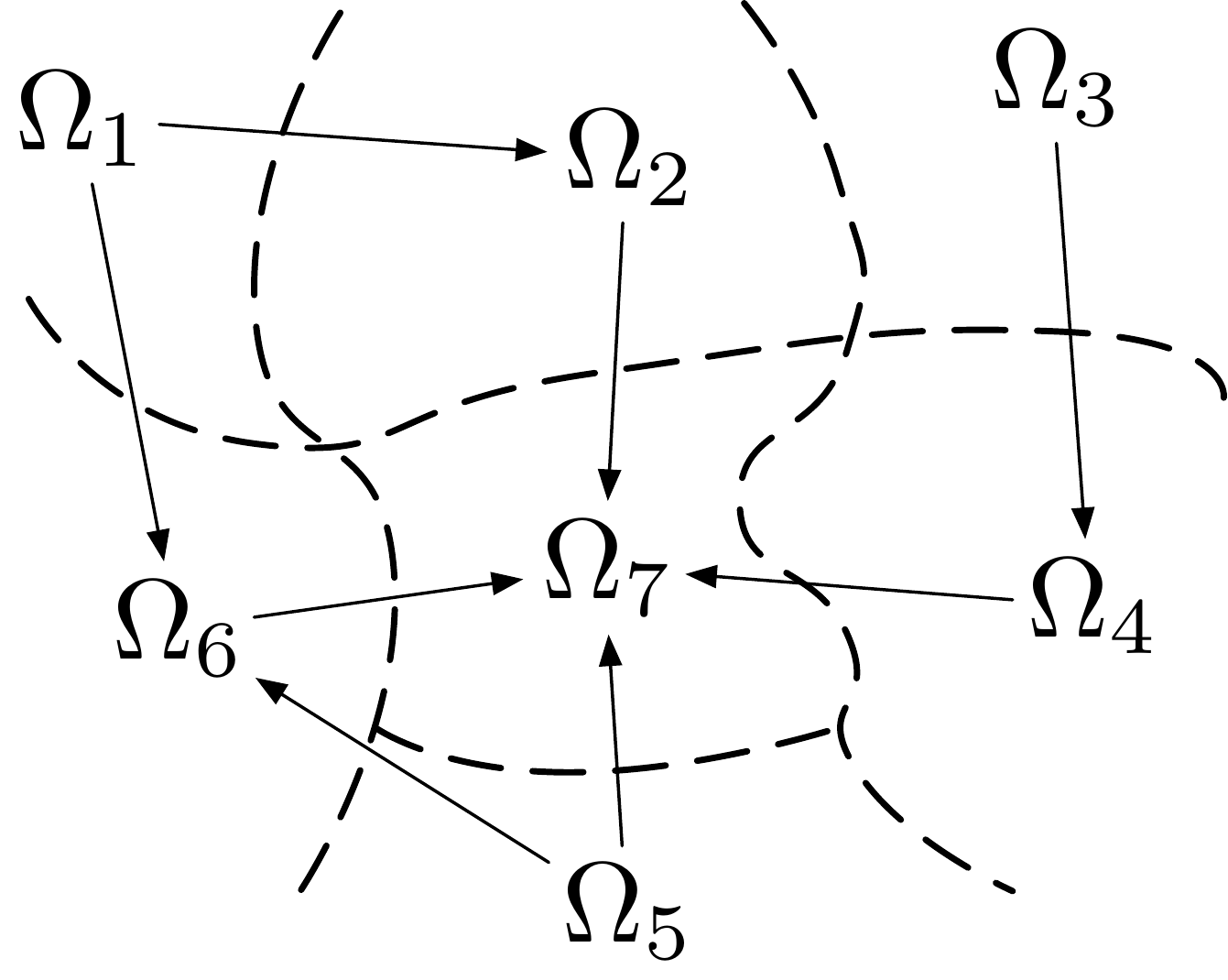}
    \caption{Illustration of the iterative application of the main result. If we know that trajectories of optimal policies will always move from $\Omega_i$ to $\Omega_j$, with $i<j$, and finally stay in $\Omega_7$, we can apply Lemma~\ref{lem_recursive} with $\Omega_\alpha=\Omega_i$ and $\Omega_\beta=\cup_{j>i}\Omega_j$, starting from the back with $\Omega_\alpha=\Omega_6$ and $\Omega_\beta=\Omega_7$, then $\Omega_\alpha=\Omega_5$ and $\Omega_\beta=\Omega_6 \cup \Omega_7$, and so on.}
    \label{fig:7_oper_regions}
\end{figure}

\section{Numerical Examples}
\label{sec:experimental_results}
A simple numerical example was already described in Section \ref{sec:introduction} and Figures \ref{fig:ex1_1} and \ref{fig:VF_using_next} above. To see how the 
 theoretical results from Section \ref{sec:proposed_approach} apply to a more dynamic example, we implemented Examples 1 and 2 above using the Unity Engine and the RL framework called ML-agents \cite{julianiUnityGeneralPlatform2020}. This section will give an overview of the setting, configurations and results. 

\subsection{Motivation of the setup}
To illustrate the theoretical results above we picked a simple example with just two policies. 
This example shows how the approach works, but not why it is needed, as the example can
easily be solved by using a standard single RL policy, as is done for performance comparison in the experiment. To motivate the approach, we note that this example is not a typical intended use case. As discussed above, BTs have been shown to be optimally modular \cite{biggar2020modularity}, and modularity is important only when you have a complex system with many policies, 
\blue{
such as the one in Figure 1 of \cite{ogrenBehaviorTreesRobot2022}. This is a typical use case, with a BT having 13 conditions to elaborately switch between 8 different actions.
However, the principles are the same, and to enable us to go into details, we pick the simple example above.
} 
\red{In such a system, the policies and switching boundaries are already there, and one can choose over what boundaries to apply the proposed approach to improve performance, as discussed above. However, it would take an extensive number of pages to present an example with many different policies solving different subtasks. Therefore, we stick to a simple problem.}

\red{We will compare the proposed approach with a single RL problem, to illustrate the main theorem of the paper, but we will not compare the approach to }
\remove{\cite{erskine2022developing}} 
\red{experimentally for two reasons. First, one would need to pick a fair }
\remove{\cite{erskine2022developing}.}  
\red{In general this is hard, but for the simple example here we can predict how their algorithm would behave for some values of $\eta$. With $\eta=0$, only using the critic of the subsequent policy, we would get the equivalent to a single RL policy, whereas for }
\remove{$\eta=1$} 
\red{we would get two completely separate ones. Second, as noted in Section 2 above and described in }
\remove{\cite{erskine2022developing}} 
\red{, their approach cannot handle problems where information is needed across two policy boundaries. Thus, to find a problem where performance differs to our advantage is straightforward. }

\subsection{Scenario setup}
As in Examples 1 and 2 above, and illustrated in Figure \ref{fig:env_example}, the environment consists of an agent in the shape of a ball, a target object in the shape of a box, and a rectangular goal area. The environment is initialized by setting a random position for the agent and the target object, as well as a randomly chosen edge for the goal area. The task of the agent is to push the target to the goal area. It can fail by exiting the mission area, or moving the target outside of the mission area. The agent's controller is split into two sub-behaviors. A \textit{Move To} behavior, which is active when the agent is far away from the box, and a \textit{Push} behavior which is activated if the agent is close to the box. Thus $\Omega_\beta=\{s \in S: ||p_{agent}-p_{box}|| < d\}$ and $\Omega_\alpha = S \setminus \Omega_\beta$, with $d$ equal to $2.5$ times the side of the box. 

\begin{figure}
    \centering
    \includegraphics[width=7cm]{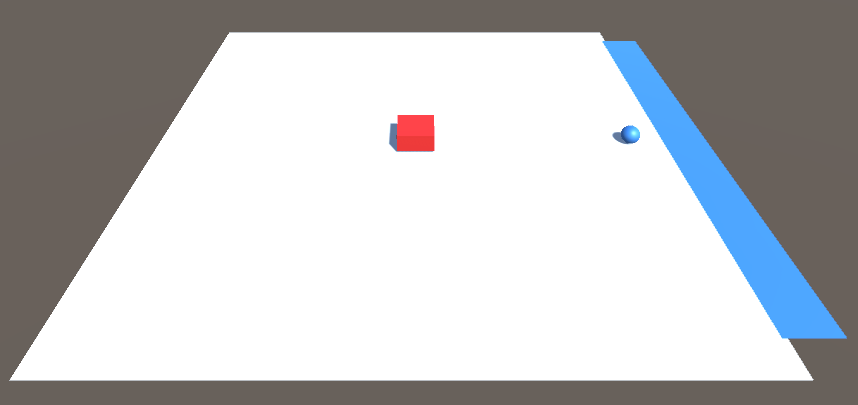}
    \caption{A snapshot of the experimental environment including a blue spherical agent, a blue goal area rectangle and a red target box. The agent fails if the target or the agent leave the plane, and succeeds if the target reaches the blue goal area.}
    \label{fig:env_example}
\end{figure}

\begin{figure}
    \centering
    \includegraphics[width=7cm]{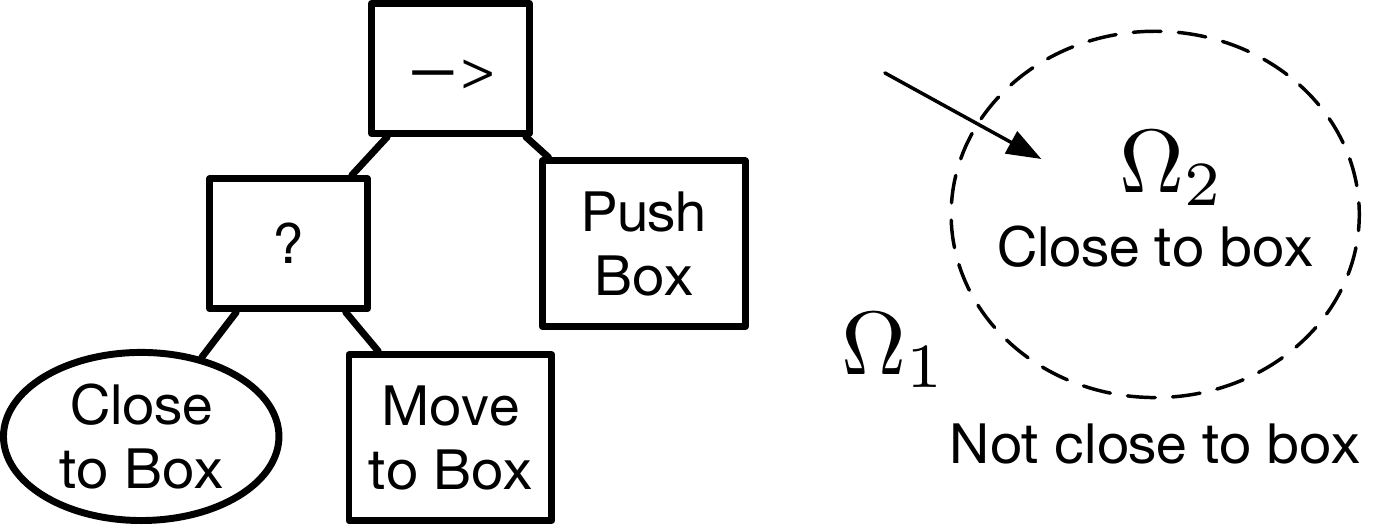}
    \caption{The simple BT of the example on the left, with one condition and two policies, and the operating regions on the right. Move to box will execute in $\Omega_1$ and Push Box will execute in $\Omega_2$.}
    \label{fig:2_oper_regions}
\end{figure}

The target and the agent are subject to second-order dynamics, including friction, as implemented by the physics engine in Unity. The agent is controlled through applying a force in the horizontal plane. The agent observes it's own position and velocity, as well as the relative position of the goal and the target box, making the state space $S$ 8-dimensional. 

\subsection{The different controllers}
First we will create five different sub-behaviors, listed on the left of Table \ref{tab:rewards}. 
\textit{Move To (Local)} will be created using RL aiming to reach the box as fast as possible. \textit{Move To (VF)} will be created using RL aiming to quickly reach a position in $\Omega_\beta$ with a good value function of the following push behavior.
\textit{Push (RL)} will be created using RL, aiming to quickly get the box to the goal area when starting in $\Omega_\beta$.  
\textit{Push (Manual)} will be created manually, aiming to do a decent job of pushing the box to the goal area when starting in $\Omega_\beta$. Finally, for comparison, we create a \textit{Single Behavior} for completing the entire task in $\Omega_\alpha \cup \Omega_\beta$ as fast as possible, using RL.
Note that the \textit{Push (Manual)} controller will be intentionally suboptimal to highlight how this effects the overall system performance. Thus \textit{Push (Manual)} first moves straight towards a position opposite the goal with respect to the target and then moves the ball straight towards the box.

\begin{table}[]
    \centering
    \begin{tabular}{l|l|c|c}
       Behavior  &  Reward every timestep & Completion & Fail\\
       \hline
       Move To (Local) &  $-0.001$ & 1  & -1\\
       Move To (VF)    &  $-0.001$ & $ \gamma v_\beta^*(s)$ & -1 \\
       Push (RL) & $-0.001 + \Delta_{dist}$ & 1 & -1\\
       Push (Manual)& $-0.001 + \Delta_{dist}$ & 1 & -1\\
       Single Behavior  & $-0.001+ \Delta_{dist}$ & 1 & -1
    \end{tabular}
    \caption{Rewards for the five different policies. \\ In the case of the manually designed push behavior, the policy is not trained, but a critic network to estimate the value function (VF) $v_\beta^*$ is. }
    \label{tab:rewards}
\end{table}

Then we create five different combinations of the sub-behaviors,
as listed on the left of Table \ref{tab:policies}.

\begin{table}[]
    \centering
    \resizebox{1\columnwidth}{!}{
    \begin{tabular}{l|c|c|c|c}
       Overall  &  Accumulated  & Success & Failure & Success   \\
       Policy  & Reward &  Duration (steps)& Duration (steps) & Rate (\%) \\
       \hline
       $\pi_A$, Move To(Local) + Push (RL)  & $1.39 \pm 0.48$ & $248.02 \pm 90.97$  & $171.12 \pm 58.85$ & $95.20$ \\
       $\pi_B$, Move To(Local) + Push (Manual) & $-0.07 \pm 1.20$ & $237.88 \pm 120.40$  & $148.31 \pm 57.84$ & $39.59$ \\
       $\pi_C$, Move To(VF) + Push (RL)     & $\mathbf{1.57} \pm 0.18$ & $204.01 \pm 56.21$  & $175.65 \pm 45.15$ & $\mathbf{99.86}$ \\
       $\pi_D$, Move To(VF) + Push (Manual) & $1.40 \pm 0.51$ & $275.67 \pm 138.14$  & $280.56 \pm 153.34$ & $96.69$ \\
       $\pi_E$, Single Behavior  & $1.52 \pm 0.17$ & $\mathbf{192.49} \pm 56.41$  & $128.50 \pm 80.01$ & $98.76$
    \end{tabular}
    }
    \caption{Evaluated policies. Mean and standard deviation across 10 000 random episodes post training. Best results indicated by bold numbers. }
    \label{tab:policies}
\end{table}
Note that the two \textit{Move To (VF)} are different, as they are trained with different value functions on the boundary, coming from \textit{Push RL} and \textit{Push Manual} respectively.

\subsection{RL formulation}

The rewards of the RL are summarized in Table \ref{tab:rewards}.
All sub-behaviors receive a small negative reward of $-0.001$ for each passing time step and a small reward $\Delta_{dist}$ for moving the box closer to the goal area, $\Delta_{dist}=(d_{goal}(s_{t-1}) - d_{goal}(s_{t}))/d_{start}$, with $d_{goal}$ being the distance from the box to the goal area, and $d_{start}$ this distance at the start, for normalization.
At the end of the episode, all sub-behaviors
receive a negative reward of $-1$ if the agent or the target leaves the mission area, and a positive reward of $1$ or $\gamma v_\beta^*(s)$ for completion of the task. Above, $v_\beta^*(s)$ is the value function of the following sub-behavior, i.e., either the manual or RL-version of Push.
All actions and observations are normalised to a range of $[-1,1]$. 

The training is done using the built-in implementation of PPO \cite{schulmanProximalPolicyOptimization2017} from the Unity ML-Agents package \cite{unity}. In the case of the manual behavior \textit{Push (Manual)}, we only train a critic network to estimate the value function, using the normal PPO loss function \cite{schulmanProximalPolicyOptimization2017}, with the policy loss removed.

\red{The RL was performed by training each of the sub-behaviors whenever they are active in the environment. That is, we don't use individual training environments for each behavior. Instead, the agent takes actions according to the currently active behaviour's policy. All experiences gathered during this time will be used only for training the currently active behavior. Whenever the agent crosses the switching boundary, the active episode ends and a new one is started for the policy that was switched to.}
\blue{The RL was executed by training each sub-behavior when active in a single environment, as opposed to using separate training environments for each behavior. The agent operates according to the policy of the currently active behavior, using the gathered experiences for training that specific behavior. Upon crossing a switching boundary, the ongoing episode ends, and a new one is started for the switched-to policy.}

As suggested in Lemma \ref{lemma_main}, we first train \textit{Move To (Local)} and \textit{Push (RL)}, and estimate the value function of \textit{Push (Manual)}. This creates the components of $\pi_A$ and $\pi_B$. Then we train the two versions of \textit{Move To (VF)}, using the value functions from \textit{Push (RL)} and \textit{Push (Manual)} respectively, creating the components of $\pi_C$ and $\pi_D$.
Finally, we train \textit{Single Behavior}, for $\pi_E$.

\subsection{Results}

\blue{We present the data in three parts. First, we discuss post-training evaluation results. Providing context to these results, we then analyze reward graphs and histograms of the training data, comparing outcomes with and without utilizing the value function as a reward signal. Finally, we measure the disparity between the value function of the single learned policy and those of the split models to illustrate the theory in Lemma \ref{lemma_main}.}

\red{In order to present the results, we split the data into three parts.
We start by discussing the post-training evaluation results. In order to give those results some context, we then look at the training data by analysing the reward graphs and histograms, comparing results with and without the use of the value function as a reward signal. Additionally, we will measure the difference between the value function of the single learned policy to those of the split models, in order to illustrate the theory in Lemma 2.} 

\red{An overall evaluation of running the five different policies from 10000 random starting states can be found in Table II. As predicted by theory, $\pi_C$ and $\pi_E$ are the best ones. Given additional training time, the differences should decrease, converging to the same optimal policy. For now we see that there is a small difference, with $\pi_C$ producing slightly higher rewards and less failures, and $\pi_E$ completing slightly faster, when not failing. Both of $\pi_C$ and $\pi_E$ perform better than the locally optimal $\pi_A$. Comparing the two versions with the manual push behavior, $\pi_B$ and $\pi_D$, we see a huge difference in success rate, with $\pi_B$ failing about $60\%$ of the runs. Despite the sub-optimal manual behavior, the proposed method allows $\pi_D$ to produce reasonable handover states, resulting in a success rate and accumulated reward comparable to the RL policies of $\pi_A$, at the price of slightly higher completion times.}

\blue{Table \ref{tab:policies} presents an overall evaluation of running five policies from 10 000 random starting states. As theory suggests, $\pi_C$ and $\pi_E$ perform best, with slight differences. $\pi_C$ yields marginally higher rewards and fewer failures, and $\pi_E$ completing tasks a bit faster when not failing. Both $\pi_C$ and $\pi_E$ outperform the locally optimal $\pi_A$. Comparing versions with manual push behavior, $\pi_B$ has a significant 60\% failure rate, while despite the sub-optimal manual behavior, $\pi_D$ achieves success rates and accumulated rewards comparable to RL policies like $\pi_A$, albeit with slightly longer completion times.}

The historical training reward, per training session, can be seen in Figures \ref{fig:learn_reward} and \ref{fig:manual_reward}. Both graphs feature the single model behavior as a baseline for comparison. We can see that after an initial decrease in reward, all training configurations have a stable upwards trend that reaches an equilibrium, with the reward plateauing around a specific value.

\begin{figure}[]
    \centering
\includegraphics[width=0.85\columnwidth]{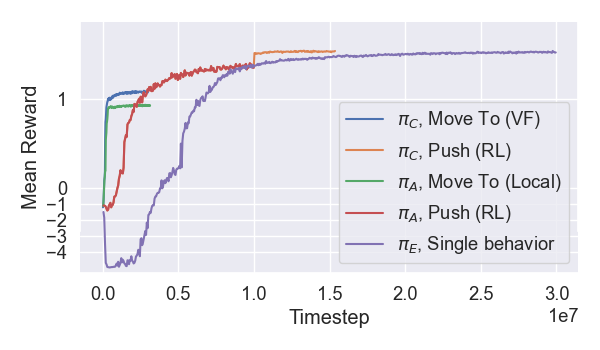}
    \caption{Training results for $\pi_A$, $\pi_C$ and $\pi_E$, using Push (RL).}
    \label{fig:learn_reward}
\end{figure}

\begin{figure}[]
    \centering
    \includegraphics[width=0.85\linewidth]{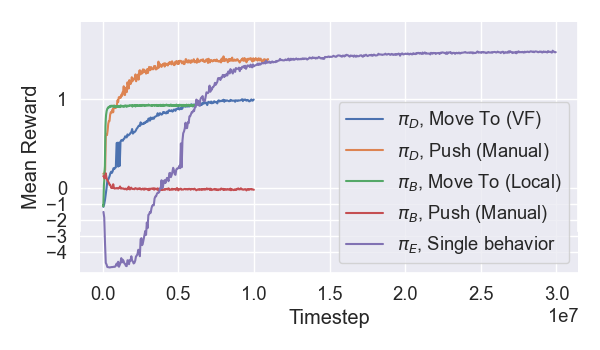} 
    \caption{Training results for $\pi_B$, $\pi_D$ and $\pi_E$, using Push (Manual).}
    \label{fig:manual_reward}
\end{figure}

In Figure \ref{fig:learn_reward} we see the training results of $\pi_A$, $\pi_C$ and $\pi_E$.
$\pi_E$ is trained first to use as a reference.
Then we train $\pi_A$ that does not use the value function to connect the two sub-behaviors. As can be seen, the mean reward converges quickly for 
\textit{Move To (Local)}(green) and \textit{Push (RL)}(red).
Then we train $\pi_C$, by training \textit{Move To (VF)}(blue) using $v_\beta^*(s)$ from \textit{Push (RL)}. As can be seen, this achieves a higher mean reward than \textit{Move To (Local)}. This is reasonable, as the final reward given by $v_\beta^*(s)$ is sometimes larger than one.

In Figure \ref{fig:manual_reward} we see the training results of $\pi_B$, $\pi_D$ and $\pi_E$.
When we train $\pi_B$ an interesting thing happens. As \textit{Move To (Local)} gets better, the performance of the fixed manual controller
\textit{Push (Manual)}(red), decreases. This is due to the fact that the manual controller is not very robust, and as \textit{Move To (Local)} learns to enter $\Omega_\beta$ in close to minimum time, using high velocities, the starting states handed to \textit{Push (Manual)} become more difficult.

When training $\pi_D$ we used the value function $v_\beta^*(s)$ from \textit{Push (Manual)} as final reward. In this way, \textit{Move To (VF)} is made aware of the capabilities of the following \textit{Push (Manual)}. The average reward of \textit{Move To (VF)} rises slower than \textit{Move To (Local)}, but at the same time the average reward of the static \textit{Push (Manual)} continues to increase.
In the end, the combined result of the constrained $\pi_D$ is slower, but almost as reliable as the near optimal unconstrained $\pi_E$, see Table \ref{tab:policies}.

\begin{figure}[]
    \centering
    \includegraphics[width=0.95\columnwidth]{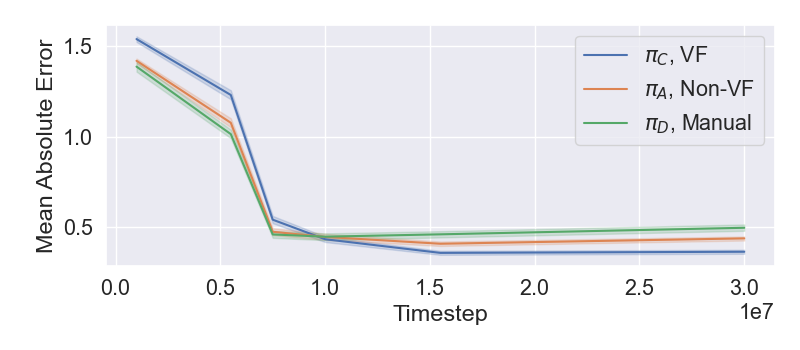} 
    \caption{Mean absolute difference between the value functions of the experimental configuration and single learned model value function with 95\% confidence interval. Computed over 100 example states per model iteration, 625 samples per state.}
    \label{fig:vf_error_mae}
\end{figure}

\subsection{Comparing the Value Functions}

Lemma \ref{lemma_main} shows that the optimal value functions of the original MDP $P_0$ is identical to the optimal value functions of the two restricted MDPs $P_\alpha$ and $P_\beta$.

Running an RL algorithm we know that as time tends to infinity, the learned value function will converge towards the optimal one \cite{suttonReinforcementLearningIntroduction2018}.
We measured the difference  between the value function of $\pi_E$ and the two restricted ones for the three policies $\pi_A$, $\pi_C$ and $\pi_D$.
The results, seen in Figure \ref{fig:vf_error_mae}, show that the difference is smallest for $\pi_C$, as predicted by the Lemma.
Note also that after the initial drop, the difference to $\pi_A$ and $\pi_D$ actually increases as training progresses, which is reasonable as they are not expected to converge to the same optimal value function.

The difference in value functions is more distinct in the Manual case, since the \textit{Push} behavior is fixed causing the \textit{MoveTo} to try to compensate. This means that the overall behavior will diverge more strongly from the optimal single behavior case, with both sub-behavior policies shaped differently. With $\pi_A$, the difference would primarily be in the \textit{MoveTo} behavior, causing the error the be smaller.

\section{Conclusions}
\label{sec:conclusions}
We investigated solutions to the local optimality issue that occurs when trying to combine learned controllers  into a handcrafted policy. Leveraging the BT structure's deterministic switching between policies, we used the value function estimate of the controller being switched to as a final reward of the previous controller. 
The proposed approach goes beyond the state of the art in that we provide theoretical guarantees of optimality, are not using an arbitrary design parameter, and can handle problems where information has to flow across more than one switching boundary.

\bibliographystyle{IEEEtran}
\bibliography{references,MyLibraryPetter}

\begin{thebibliography}{10}
\providecommand{\url}[1]{#1}
\csname url@rmstyle\endcsname
\providecommand{\newblock}{\relax}
\providecommand{\bibinfo}[2]{#2}
\providecommand\BIBentrySTDinterwordspacing{\spaceskip=0pt\relax}
\providecommand\BIBentryALTinterwordstretchfactor{4}
\providecommand\BIBentryALTinterwordspacing{\spaceskip=\fontdimen2\font plus
\BIBentryALTinterwordstretchfactor\fontdimen3\font minus
  \fontdimen4\font\relax}
\providecommand\BIBforeignlanguage[2]{{%
\expandafter\ifx\csname l@#1\endcsname\relax
\typeout{** WARNING: IEEEtran.bst: No hyphenation pattern has been}%
\typeout{** loaded for the language `#1'. Using the pattern for}%
\typeout{** the default language instead.}%
\else
\language=\csname l@#1\endcsname
\fi
#2}}

\bibitem{iovino2022survey}
M.~Iovino, E.~Scukins, J.~Styrud, P.~{\"O}gren, and C.~Smith, ``A survey of
  behavior trees in robotics and ai,'' \emph{Robotics and Autonomous Systems},
  vol. 154, p. 104096, 2022.

\bibitem{colledanchise2018behavior}
M.~Colledanchise and P.~{\"O}gren, \emph{Behavior trees in robotics and AI: An
  introduction}.\hskip 1em plus 0.5em minus 0.4em\relax CRC Press, 2018.

\bibitem{pereira2015framework}
R.~d.~P. Pereira and P.~M. Engel, ``A framework for constrained and adaptive
  behavior-based agents,'' \emph{arXiv preprint arXiv:1506.02312}, 2015.

\bibitem{islaHandlingComplexityHalo2005}
D.~Isla, ``Handling {{Complexity}} in the {{Halo}} 2 {{AI}},'' in
  \emph{Proceedings of the {{Game Developers Conference}} ({{GDC}})}, 2005.

\bibitem{biggarModularityReactiveControl2022}
O.~Biggar, M.~Zamani, and I.~Shames, ``On modularity in reactive control
  architectures, with an application to formal verification,'' \emph{ACM
  Transactions on Cyber-Physical Systems (TCPS)}, May 2022.

\bibitem{burridgeSequentialCompositionDynamically1999}
R.~R. Burridge, A.~A. Rizzi, and D.~E. Koditschek, ``Sequential {{Composition}}
  of {{Dynamically Dexterous Robot Behaviors}},'' \emph{The International
  Journal of Robotics Research}, vol.~18, no.~6, pp. 534--555, June 1999.

\bibitem{erskine2022developing}
J.~Erskine and C.~Lehnert, ``Developing cooperative policies for multi-stage
  reinforcement learning tasks,'' \emph{IEEE Robotics and Automation Letters},
  vol.~7, no.~3, pp. 6590--6597, 2022.

\bibitem{ogrenBehaviorTreesRobot2022}
P.~{\"O}gren and C.~I. Sprague, ``Behavior {{Trees}} in {{Robot Control
  Systems}},'' \emph{Annual Review of Control, Robotics, and Autonomous
  Systems}, vol.~5, no.~1, 2022.

\bibitem{dey2013ql}
R.~Dey and C.~Child, ``Ql-bt: Enhancing behaviour tree design and
  implementation with q-learning,'' in \emph{2013 IEEE Conference on
  Computational Inteligence in Games (CIG)}.\hskip 1em plus 0.5em minus
  0.4em\relax IEEE, 2013, pp. 1--8.

\bibitem{hannaford2016simulation}
B.~Hannaford, D.~Hu, D.~Zhang, and Y.~Li, ``Simulation results on selector
  adaptation in behavior trees,'' \emph{arXiv preprint arXiv:1606.09219}, 2016.

\bibitem{fu2016reinforcement}
Y.~Fu, L.~Qin, and Q.~Yin, ``A reinforcement learning behavior tree framework
  for game ai,'' in \emph{Proceedings of the 2016 International Conference on
  Economics, Social Science, Arts, Education and Management Engineering}, 2016.

\bibitem{zhang2017combining}
Q.~Zhang, L.~Sun, P.~Jiao, and Q.~Yin, ``Combining behavior trees with maxq
  learning to facilitate cgfs behavior modeling,'' in \emph{2017 4th
  International Conference on Systems and Informatics (ICSAI)}.\hskip 1em plus
  0.5em minus 0.4em\relax IEEE, 2017, pp. 525--531.

\bibitem{suttonMDPsSemiMDPsFramework1999}
R.~S. Sutton, D.~Precup, and S.~Singh, ``Between {{MDPs}} and semi-{{MDPs}}:
  {{A}} framework for temporal abstraction in reinforcement learning,''
  \emph{Artificial Intelligence}, vol. 112, no. 1-2, pp. 181--211, Aug. 1999.

\bibitem{lee2018composing}
Y.~Lee, S.-H. Sun, S.~Somasundaram, E.~S. Hu, and J.~J. Lim, ``Composing
  complex skills by learning transition policies,'' in \emph{International
  Conference on Learning Representations}, 2018.

\bibitem{dietterichHierarchicalReinforcementLearning2000}
T.~G. Dietterich, ``Hierarchical {{Reinforcement Learning}} with the {{MAXQ
  Value Function Decomposition}},'' \emph{Journal of Artificial Intelligence
  Research}, vol.~13, pp. 227--303, Nov. 2000.

\bibitem{suttonReinforcementLearningIntroduction2018}
R.~S. Sutton and A.~G. Barto, \emph{Reinforcement Learning: An Introduction},
  second edition~ed., ser. Adaptive Computation and Machine Learning
  Series.\hskip 1em plus 0.5em minus 0.4em\relax {Cambridge, Massachusetts}:
  {The MIT Press}, 2018.

\bibitem{julianiUnityGeneralPlatform2020}
A.~Juliani, V.-P. Berges, E.~Teng, A.~Cohen, J.~Harper, C.~Elion, C.~Goy,
  Y.~Gao, H.~Henry, M.~Mattar, and D.~Lange, ``Unity: {{A}} general platform
  for intelligent agents,'' \emph{arXiv preprint arXiv:1809.02627}, 2020.

\bibitem{biggar2020modularity}
O.~Biggar, M.~Zamani, and I.~Shames, ``On modularity in reactive control
  architectures, with an application to formal verification,'' \emph{arXiv
  preprint arXiv:2008.12515}, 2020.

\bibitem{schulmanProximalPolicyOptimization2017}
J.~Schulman, F.~Wolski, P.~Dhariwal, A.~Radford, and O.~Klimov, ``Proximal
  {{Policy Optimization Algorithms}},'' \emph{arXiv}, Aug. 2017.

\bibitem{unity}
A.~Juliani, V.-P. Berges, E.~Teng, A.~Cohen, J.~Harper, C.~Elion, C.~Goy,
  Y.~Gao, H.~Henry, M.~Mattar, and D.~Lange, ``Unity: A general platform for
  intelligent agents,'' \emph{arXiv preprint arXiv:1809.02627}, 2020.

\end{thebibliography}
\end{document}